\icmltitlerunning{``Other-Play'' for Zero-Shot Coordination}
\begin{document}

\twocolumn[
\icmltitle{``Other-Play'' for Zero-Shot Coordination}




\icmlsetsymbol{equal}{*}
\newtheorem{theorem}{Theorem}
\newtheorem{corollary}{Corollary}
\newtheorem{prop}{Proposition}
\newtheorem{lemma}{Lemma}

\begin{icmlauthorlist}
\icmlauthor{Hengyuan Hu}{equal,fair}
\icmlauthor{Adam Lerer}{fair}
\icmlauthor{Alex Peysakhovich}{fair}
\icmlauthor{Jakob Foerster}{equal,fair}
\end{icmlauthorlist}

\icmlaffiliation{fair}{Facebook AI Research, USA}

\icmlcorrespondingauthor{Hengyuan Hu}{hengyuan@fb.com}
\icmlcorrespondingauthor{Jakob Foerster}{jnf@fb.com}

\icmlkeywords{Machine Learning, ICML}

\vskip 0.3in
]


%
\printAffiliationsAndNotice{\icmlEqualContribution} 

\begin{abstract}
We consider the problem of zero-shot coordination - constructing AI agents that can coordinate with novel partners they have not seen before (e.g. humans). Standard Multi-Agent Reinforcement Learning (MARL) methods typically focus on the self-play (SP) setting where agents construct strategies by playing the game with themselves repeatedly. Unfortunately, applying SP naively to the zero-shot coordination problem can produce agents that establish highly specialized conventions that do not carry over to novel partners they have not been trained with. We introduce a novel learning algorithm called \emph{other-play} (OP), that enhances self-play by looking for more robust strategies, exploiting the presence of known symmetries in the underlying problem. We characterize OP theoretically as well as experimentally. We study the cooperative card game Hanabi and show that OP agents achieve higher scores when paired with independently trained agents. In preliminary results we also show that our OP agents obtains higher average scores when paired with human players, compared to state-of-the-art SP agents.
\end{abstract}
\section{Introduction}
\label{sec:intro}
A central challenge for AI is constructing agents that can coordinate and cooperate with partners they have not seen before \citep{kleiman2016coordinate,lerer2017maintaining,carroll2019utility,shum2019theory}. This is particularly important in applications such as cooperative game playing, communication, or autonomous driving \cite{foerster2016learning,lazaridou2016multi,sukhbaatar2016learning,resnick2018vehicle}. In this paper we consider the question of zero-shot coordination where agents are placed into a cooperative situation with a novel partner and must quickly coordinate if they wish to earn high payoffs.

Our setting is a partially observed cooperative Markov game (MG) which is commonly known among both agents. The agents are able to construct strategies separately in the training phase but cannot coordinate on the strategies that they construct. They must then play these strategies when paired together one time. We refer to this as zero-shot coordination.

A popular way of constructing strategies for MG with unknown opponents is self-play (or ``self-training"), \cite{tesauro1994td}. Here the agent controls both players during training and iteratively improves both players' strategies. The agent then uses this strategy at test time. If it converges, self-play finds a Nash equilibrium of the game and yields superhuman AI agents in two-player zero-sum games such as Chess, Go and Poker \citep{campbell2002deep,silver2017mastering,brown2018superhuman}. However, in complex environments self-play agents typically construct `inhuman' strategies \citep{carroll2019utility}. This may be a benefit for zero-sum games, but is less useful when it is important to coordinate with, not trick, one's partner.

Our main contribution is ``other-play'' (OP), an algorithm for constructing good strategies for the zero-shot coordination setting. We assume that with every MG we are provided with a set of symmetries, i.e. arbitrary relabelings of the state/action space that leave trajectories unchanged up to relabeling. One source of miscoordination in zero-shot settings is that agents have no good way to break the symmetries (e.g. should we drive on the left or the right?). In most MDPs, there are classes of strategies that require more or less coordinated symmetry breaking. OP's goal is to find a strategy that is \textit{maximally robust} to partners breaking symmetries in different ways while still playing in the same class. OP works as follows: it uses RL to maximize reward when matched with agents playing the same policy under a random relabeling of states and actions under the known symmetries.

\begin{figure}[h]
\vskip 0.2in
\begin{center}
\centerline{\includegraphics[width=\columnwidth]{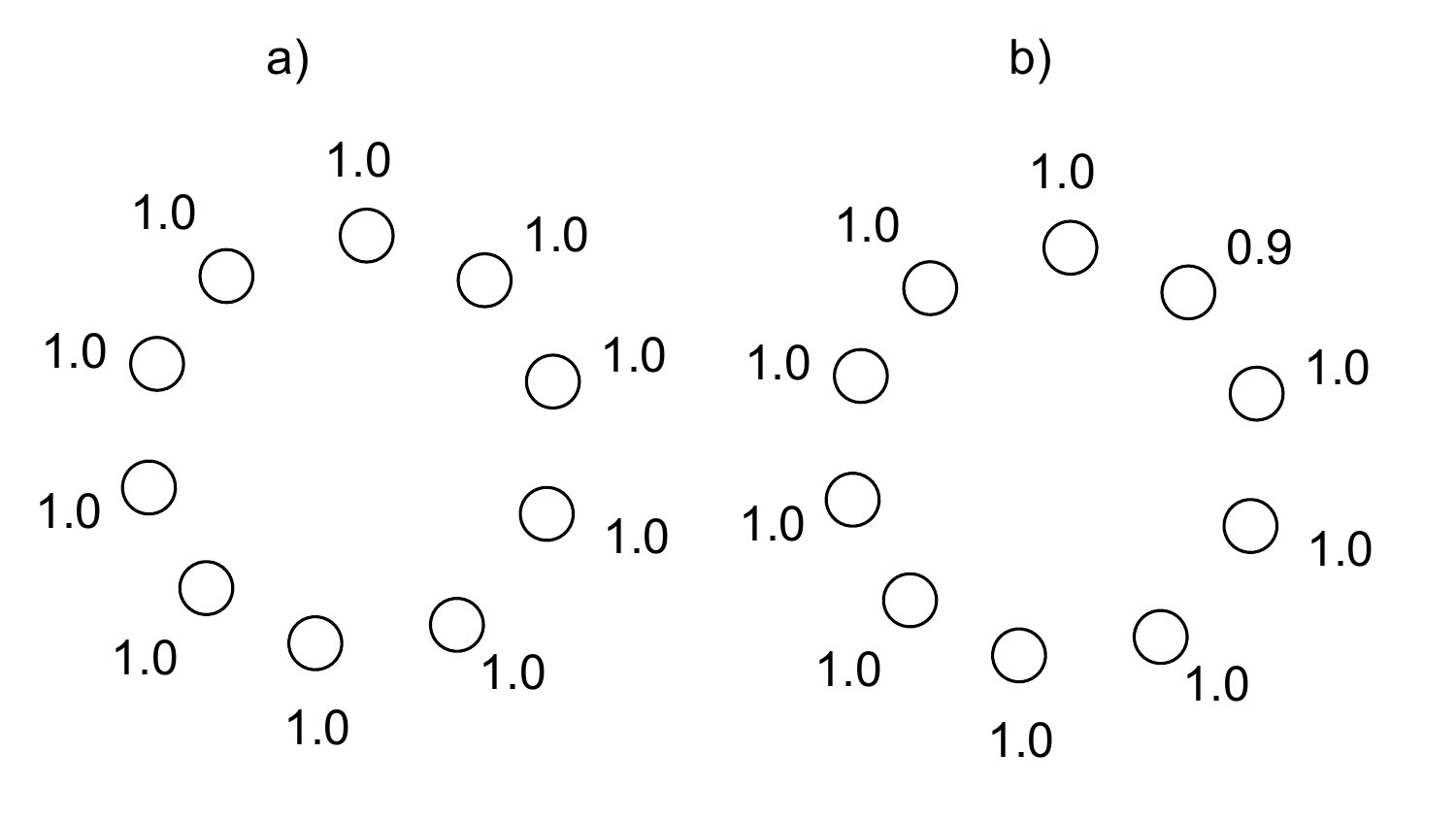}}
\caption{The \emph{lever coordination game} illustrates the counter intuitive outcome of zero-shot coordination.}
\label{fig:lever}
\end{center}
\vskip -0.2in
\end{figure}

To show the intuition behind OP consider the following game: you need to coordinate with an unknown stranger by independently choosing one from a set of 10 different levers (Figure \ref{fig:lever}a). If both of you pick the same lever a reward of 1 point is paid out, otherwise you leave the game empty-handed. Clearly, without any prior coordination the only option is to pick one of the levers at random, leading to an expected reward of $1 / 10 = 0.1$. 

Next we consider a game that instead only pays $0.9$ for one of the levers, keeping all other levers unchanged (Figure \ref{fig:lever}b). How does this change the coordination problem? 

From the point of view of the MG the $1$ payoff levers have no labels and so are symmetric. Since agents cannot coordinate on how to break symmetries, picking one of the $1.0$ levers leads to $0.11$ expected return. By contrast, OP suggests the choice of the $0.9$ lever. 

We note that this example illustrates another facet of OP: it is an equilibrium in meta-strategies. That is, neither agent wishes to deviate from OP as a reasoning strategy if the other agent is using it.

Note that OP does not use any action labels. Instead OP uses only features of the problem description to coordinate. Furthermore, note that the OP policy in this setting is the only policy that would \emph{never} be chosen by the types of algorithms that try to use self-play to optimize team performance, e.g. VDN \cite{sunehag2018value} or SAD \cite{hu2019simplified}.

The main contributions of this work are: 1) we introduce OP as a way of solving the zero-shot coordination problem, 2) we show that OP is the highest payoff meta-equilibrium for the zero-shot coordination problem, 3) we show how to implement OP using deep reinforcement learning (deep RL) based methods, and 4) we evaluate OP in the cooperative card game Hanabi \cite{bard2020hanabi}.

\section{Related Work}
\label{sec:related_work}

\subsection{Self-Play in Cooperative Settings}
There is a large body of research on constructing agents to do well in positive-sum games. Self-play, if it converges, converges to an equilibrium of the game and so in purely cooperative games SP agents will be able to coordinate. Here the main problem is that SP may reach inefficient equilibria and so there is a large literature on pushing self-play toward higher payoff equilibria using various algorithmic innovations \citep{babes2008social,devlin2011empirical,devlin2016plan,peysakhovich2018prosocial}. However, the setting where agents play with the same agents they have been trained with (aka. centralized training with decentralized execution) is quite different from the zero-shot coordination one which we study. 

\subsection{Cooperation and Coordination}
A closely related problem to zero-shot coordination is ad-hoc teamwork \citep{stone2010ad,barrett2011empirical}. For example: a robot agent joining another existing group of agents to play soccer \citep{barrett2011empirical}. Ad-hoc teamwork differs from the zero-shot coordination problem in that it is typically framed as a learning problem of learning the policies/capabilities of other agents during interaction whereas the pure zero-shot coordination scenario is one where there is no time to update a fixed policy that is constructed during training. These problems are closely linked and incorporating ideas from this literature into algorithms like OP is an interesting question for future research. However, another difference is that zero-shot agents only need to coordinate well with teams of agents that are optimized for the zero-shot setting, rather than arbitrary teams self-play of agents.

There is recent work looking at the situation where the one RL agent, trained separately, must join a group of new AI agents or humans \citep{lerer2018learning,tucker2020adversarially,carroll2019utility}. These papers focus on using small amounts of observed behavior from partnered test time agents to either guide self-play to selecting the equilibrium (or ``social convention'') of the existing agents \citep{lerer2018learning,tucker2020adversarially} or allow building a human model which can be used to learn an approximate best response using RL \citep{carroll2019utility}. This setting is related, but zero-shot coordination gives no behavioral data to either agent to guide self-play or allow building a model of the other agent. Instead, zero-shot makes the assumption that test-time agents being themselves are optimized for the zero-shot setting (rather than the SP setting). 

\subsection{Game Theory and Tacit Coordination}
Within behavorial game theory a large body of work considers coordination based on ``focal points'' or other shared grounding such as the famous ``you lost your friend in New York City, where are you going to meet?'' coordination problem \citep{schelling1980strategy,mehta1994nature}. However, such focal points typically come from the fact that these coordination problems are not just abstract but grounded in \emph{exogenous} features, \emph{action labels}, that are meaningful due to a prior shared context.  The zero-shot coordination setting thus is a special form of the tacit coordination problem in which there are no shared exogenous features between the different agents and OP can be thought of as a way to coordinate in this setting.

There is also a large theoretical literature on learning and evolving coordination \citep{nowak2006evolutionary}. However, as with the self-play literature, it focuses on long run outcomes within a single group of agents learning or evolving together and does not typically focus on the question of engineering agents as we do.

\subsection{Predicting Human Decision Making}
Clearly, if we were able to accurately predict how our human counterparts are going to act in any given situation, the zero-shot coordination with human counterparts would reduce to learning a best response to those predicted actions. There is a large body of work using formal models to predict and understand human decision making \citep{camerer2011behavioral} and recent work that incorporates machine learning into this question \citep{wright2010beyond,hartford2016deep,peysakhovich2017using,kleinberg2017theory,fudenberg2019predicting}. However, the majority of this research focuses on extremely simple settings such as small normal form games \citep{wright2010beyond,hartford2016deep,fudenberg2019predicting} or single decision problems \citep{peysakhovich2017using,kleinberg2017theory} rather than complex cooperative settings with partial observability.

\subsection{Domain Randomization}
Our work is also related to the idea of domain randomization \citep{tobin2017domain}. In RL and supervised learning domain randomization tries to make the realized model invariant to some feature of the environment. For example, an object detector should be invariant to the exact camera angle from which a view of an object is captured. OP applies a similar idea: a policy should be invariant to how an agent's partner breaks symmetries in the underlying game.

\subsection{Exploiting Symmetries in Other Contexts}
In the single agent context, it is harder to plan in MDPs that have more states. The idea of abstraction is to use underlying symmetries to `compress' a large MDP into a simpler one, solve for the optimal strategy in the abstraction, and then lift the strategy to the original MDP. One set of such methods are MDP homomorphisms \cite{van2020plannable,ravindran2004approximate}. These, like OP, use underlying symmetries but their goal is different: they want to find payoff maximizing policies for single agent decision problems, while OP seeks to find robust policies for zero-shot coordination. Note that as the lever game illustrates robust policies are not necessarily the payoff maximizing ones. In addition, these methods do not solve the problem of equilibrium selection among `symmetric' policies in games, because the symmetry in the MDP just becomes a symmetry in the homomorphism.

A similar technique (compress, solve, then lift) is also used for finding Nash equilibria in large games like poker \cite{gilpin2007lossless}. In this case the abstraction treats `isomorphic' states equally and thus reduces the effective number of states in the game. Again, the goal is different - poker abstractions are trying to find Nash equilibrium strategies in the original game while OP uses symmetries to select among a set of possible equilibria.  

\section{Zero-Shot Coordination}
\label{sec:background}

In this paper we study fully cooperative Markov games. To construct this environment we start out with a Dec-POMDP~\cite{nair2003taming} with states $s_t \in \mathcal{S}$. There are $i=1, \cdots N$ agents who each choose actions, $a^i_t \in \mathcal{A}$ at each time step. 

The game is partially observable, $o^i_t \sim O(o|i, s_t)$ being each agent's stochastic observation function. At time $t$ each agent has an action-observation history $\tau^i_t= \{o^i_0, a^i_0, r^i_0, \cdots, o^i_t\}$ and selects action $a_i^t$ using stochastic policies of the form $\pi^i_\theta(a^i|\tau^i_t)$.  The transition function, $P(s'|s,\mathbf{a})$, conditions on the joint action, $\mathbf{a}$.

The game is fully cooperative, agents share the reward $r_t$ which is conditioned on the joint action and the state. Thus, the goal is to maximize the expected return $J = \mathbb{E}_\tau R(\tau)$, where $ R(\tau) =\sum_t \gamma^t r_t$ is calculated using the discount factor $\gamma$.

Most work on cooperative MARL focuses on a setting where agents are trained together, although they must execute their policies independently at least at test time~, \emph{e.g.} \cite{lowe2017multi,foerster2018counterfactual,foerster2018bayesian}. The goal is to construct \textit{learning rules}, i.e. functions that map Markov games to (joint) policies that select policies for each agent that together maximize expected discounted return. Because agents are trained together, these policies may be arbitrarily complex.

We are instead interested in achieving high returns with partners that were not trained together with our agent. Instead, we will frame the problem as follows: suppose that multiple independent AI designers will construct agents that have to interact in various but ex-ante unknown Dec-POMDPs without being able to coordinate beforehand, what learning rule should these designers agree on? To make this even more concrete, consider the case of independent autonomous vehicles made by multiple firms which have to interact in novel traffic situations on a daily basis.

\begin{wrapfigure}{r}{0.5\columnwidth}
\begin{center}
\centerline{\includegraphics[width=0.5\columnwidth]{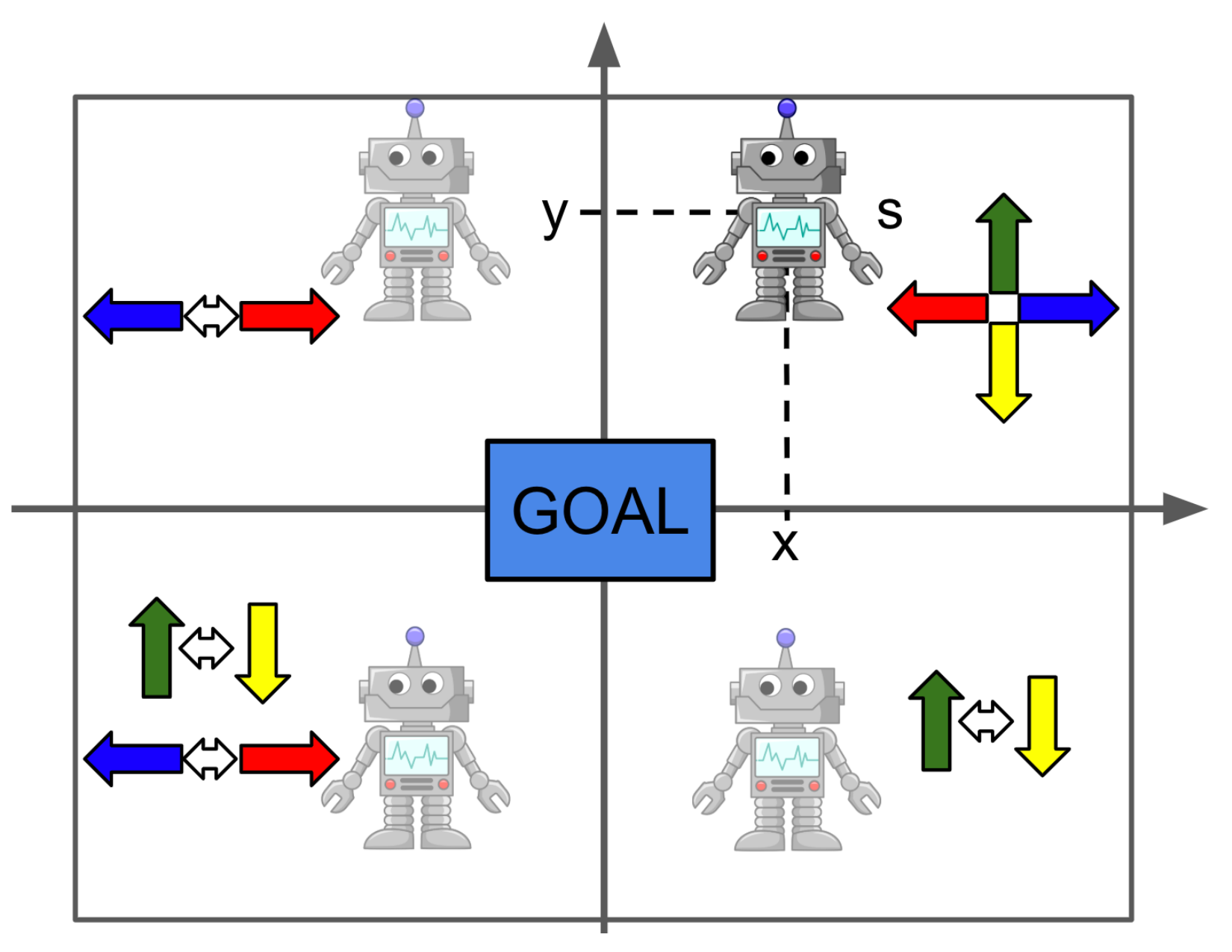}}
\caption{}
\label{fig:front-fig}
\end{center}
\vspace{-2em}
\end{wrapfigure}

The first key concept we introduce is the class of \textit{equivalence mappings}, $\Phi$, for a given Dec-POMDP.

Each element of $\Phi$ is a bijection of each of $\mathcal{S}$, $O$, and $\mathcal{A}$ onto itself, such that it leaves the Dec-POMDP unchanged:
\begin{align*}
   \phi \in \Phi \iff &  P(\phi(s')|\phi(s), \phi(a))  = P(s'|s,a) \\
    \land & R(\phi(s'),\phi(a), \phi(s) ) = R(s',a,s) \\
     \land & O(\phi(o)|\phi(s), \phi(a), i) = O(o|s, a, i) \\
     & \text{where equalities apply } \forall{s',s,a'}
\end{align*}

In other words, $\Phi$ describes the symmetries in the underlying Dec-POMDP. We note that our notation is heavily overloaded since each $\phi$ can act on actions, states and the observation function, so $\phi$ shorthand for $\phi = \{ \phi_\mathcal{S}, \phi_\mathcal{A}, \phi_O\}$. 

Next, we extend $\phi$ to also act on trajectories:
\begin{align*}
 \phi( \tau^i_t )= \{ \phi( o^i_0), \phi( a^i_0),  \phi( r_0), \cdots , \phi( o^i_t) \}.
\end{align*}

At this point an example might be helpful: consider a gridworld with a robot, shown in Figure~\ref{fig:front-fig}, that can move in the 4 cardinal directions. In our example the goal is in the middle of the room, which leaves two axis of symmetry: We can invert either the x-axis, the y-axis or both, as long as we make the corresponding changes to the action space, for example mapping ``up'' to ``down'' and vice versa when inverting the y-axis.

In a similar way, we can extend $\phi$ to act on policies $\pi$, as follows:
\begin{align*}
    \pi' = \phi(\pi) \iff \pi'(\phi(a)|\phi(\tau)) = \pi(a|\tau)\vspace{1mm},\vspace{1mm}\forall \tau, a
\end{align*}

These symmetries are the ``payoff irrelevant'' parts of the Dec-POMDP. They come from the fact that the actions and states in the Dec-POMDP do not come with labels and so taking a policy and permuting it with respect to these symmetries does not change the outcome of interest: the trajectory and the reward.

It is precisely these symmetries that can cause problems for self-play trained agents. Since agents are trained together, they can coordinate on how to break symmetries. However, there is no guarantee that multiple SP agents trained separately will break symmetries in the same way. In this case when they are paired together their policies may fail spectacularly.

The goal of OP, then, will be to build policies which are maximally robust to this failure mode.

\section{Other Play}
\label{sec:method}

We consider the $2$ agent case for ease of notation with $\pi^1, \pi^2$ denoting each agent's component of the policy and $\mathbf{\pi}$ denoting the joint policy.

First, consider self-play (SP) learning rule. This is the learning rule which tries to optimize the following objective: 
\begin{align}
    \mathbf{\pi}^* = \arg \max_{\mathbf{\pi}} J(\pi^1, \pi^2)
    \label{eq:selfplay}
\end{align}
When the Dec-POMDP is tabular we can solve this via various methods. When it is not, deep reinforcement learning (deep RL) can be used to apply function approximation and gradient based optimization of this objective function. Though there is a large literature focusing on various issues in multi-agent optimization \cite{busoniu2006multi,hernandez2019survey}, our paper is agnostic to the precise method used.

These policies can be arbitrary and in complicated Dec-POMDPs multiple maxima to Equation \ref{eq:selfplay} will often exist. These multiple policies can (and, as we will see in our experiments, often will) use coordinated symmetry breaking to receive high payoffs. Therefore, 2 matched, separately trained, SP agents will not necessarily receive the same payoff with each other as they receive with themselves.

To alleviate this issue, we need to make the optimization problem more robust to the symmetry breaking.

Let us consider the point of view of constructing a strategy for agent $1$ where agent $2$ will be the unknown novel partner. The \emph{other-play} (OP) objective function for agent $1$ maximizes expected return when randomly matched with a symmetry-equivalent policy of agent $2$ rather than with a particular one. In other words, we perform a version of self-play where agents are not assumed to be able to coordinate on exactly how to break symmetries. 

\begin{align}
    \mathbf{\pi}^* = \arg \max_{\mathbf{\pi}} \mathbb{E}_{\phi \sim \Phi} \hspace{1mm} { J(\pi^1,  \phi(\pi^2)) }
    \label{eq:otherplay}
\end{align}

Here the expectation is taken with respect to a uniform distribution on $\Phi$. We call this expected return $J_{OP}$. We will now consider what policies maximize $J_{OP}$.

\begin{lemma}
\begin{align*}
    J(\pi_A, \pi_B) = J(\phi(\pi_A^1), \phi(\pi_B^2)), \hspace{2mm} \forall \phi\in \Phi, \pi_A, \pi_B
\end{align*}
\label{lemma:permute}
\end{lemma}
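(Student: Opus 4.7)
The plan is to show that applying a symmetry $\phi$ to both players' policies just relabels the trajectory distribution (in a $\phi$-equivariant way), and since $\phi$ preserves the reward along any trajectory, the expected return is invariant. Concretely, I would argue that for any joint policy $(\pi^1,\pi^2)$, the distribution over trajectories induced by $(\phi(\pi^1),\phi(\pi^2))$ is the pushforward under $\phi$ of the distribution induced by $(\pi^1,\pi^2)$.

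The key step is a straightforward induction on the time index $t$. At each step, the trajectory distribution factorizes into the initial state distribution, the observation kernel $O$, the policy, and the transition kernel $P$. By hypothesis on $\phi\in\Phi$, each of $P$, $O$, and $R$ is invariant under the joint relabeling of states, actions, and observations; by the definition $\phi(\pi)(\phi(a)\mid\phi(\tau))=\pi(a\mid\tau)$, the policies transform covariantly under $\phi$. Composing these, the probability that $(\phi(\pi^1),\phi(\pi^2))$ produces the relabeled trajectory $\phi(\tau)$ equals the probability that $(\pi^1,\pi^2)$ produces $\tau$. Since $\phi$ is a bijection on the (finite or measurable) trajectory space, summing/integrating against the reward $R(\tau)=\sum_t\gamma^t r_t$ and using reward invariance $R(\phi(\tau))=R(\tau)$ gives $J(\phi(\pi^1),\phi(\pi^2))=J(\pi^1,\pi^2)$, which is exactly the claim once one interprets $\pi_A,\pi_B$ as joint policies and picks out the relevant components.

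The main obstacle is really bookkeeping rather than mathematics: one must be careful that $\phi$ acts consistently on every object appearing in the trajectory likelihood (initial state distribution, per-agent observations, per-agent actions, and the shared reward), and that the definition of $\Phi$ in the paper genuinely guarantees invariance of all of these. The statement of the equivalence mapping in the excerpt covers $P$, $R$, and $O$; the implicit assumption that the initial state distribution is also $\phi$-invariant is needed and should be stated, since otherwise the base case of the induction fails. Aside from that, the argument is a clean change-of-variables and no deep technique is required.
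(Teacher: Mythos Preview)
Your argument is correct and is precisely the content behind the paper's own one-line justification, which simply says the lemma ``follows directly from the fact that the MDP is invariant to any $\phi\in\Phi$.'' You have unpacked that sentence into the change-of-variables/pushforward argument on trajectories that makes it rigorous; your remark that invariance of the initial state distribution is tacitly needed is accurate and worth flagging, but it does not represent a divergence in approach.
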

\vspace{-3mm}
This Lemma follows directly from the fact that the MDP is invariant to any $\phi \in \Phi$.

\begin{lemma}
\begin{align*}
    \{\phi \cdot \phi' : \phi' \in \Phi \} = \Phi \hspace{1mm},\hspace{1mm} \forall \phi \in \Phi
\end{align*}
\label{lemma:bijection}
\end{lemma}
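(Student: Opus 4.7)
The plan is to show that $\Phi$ forms a group under composition, after which the claim is the standard fact that left-multiplication by a fixed group element permutes the group. So the two inclusions $\{\phi \cdot \phi' : \phi' \in \Phi\} \subseteq \Phi$ and $\Phi \subseteq \{\phi \cdot \phi' : \phi' \in \Phi\}$ both reduce to verifying closure and existence of inverses inside $\Phi$.

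First I would verify closure: if $\phi, \phi' \in \Phi$, then $\phi \cdot \phi'$ (acting componentwise on $\mathcal{S}$, $O$, $\mathcal{A}$) is again a bijection of each of $\mathcal{S}$, $O$, $\mathcal{A}$ onto itself, and applying the invariance of the transition function twice gives
\begin{align*}
P\bigl((\phi\cdot\phi')(s') \,\big|\, (\phi\cdot\phi')(s), (\phi\cdot\phi')(a)\bigr)
  &= P\bigl(\phi'(s') \,\big|\, \phi'(s), \phi'(a)\bigr) \\
  &= P(s' \,|\, s, a),
\end{align*}
and analogously for $R$ and $O$. Hence $\phi \cdot \phi' \in \Phi$, which immediately gives the forward inclusion.

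Second I would verify that $\Phi$ contains inverses. Each $\phi \in \Phi$ is a bijection, so $\phi^{-1}$ exists on each of $\mathcal{S}$, $O$, $\mathcal{A}$. Substituting $s \mapsto \phi^{-1}(s)$, $s' \mapsto \phi^{-1}(s')$, $a \mapsto \phi^{-1}(a)$ into the defining equality $P(\phi(s')\mid \phi(s),\phi(a)) = P(s'\mid s,a)$ yields $P(s' \mid s, a) = P(\phi^{-1}(s') \mid \phi^{-1}(s), \phi^{-1}(a))$, and similarly for $R$ and $O$, so $\phi^{-1} \in \Phi$.

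For the reverse inclusion, fix $\phi \in \Phi$ and let $\psi \in \Phi$ be arbitrary. By the inverse step, $\phi^{-1} \in \Phi$, and by closure, $\phi^{-1} \cdot \psi \in \Phi$. Setting $\phi' := \phi^{-1} \cdot \psi$ gives $\phi \cdot \phi' = \psi$, so $\psi \in \{\phi \cdot \phi' : \phi' \in \Phi\}$. Combining the two inclusions proves the lemma. I do not expect any serious obstacle here — the only thing to be careful about is the ``componentwise'' nature of each $\phi = \{\phi_\mathcal{S}, \phi_\mathcal{A}, \phi_O\}$, so that closure and inverses must be checked on each of the three pieces, but on each piece the verification is a one-line substitution.
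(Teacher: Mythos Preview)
Your argument is correct and is precisely the standard group-theoretic justification underlying the paper's one-line claim. The paper itself simply asserts that the lemma ``follows from the fact that $\Phi$ is a bijection'' without spelling out closure or inverses, so your proposal is essentially the same approach, just made rigorous.
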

\vspace{-3mm}
This Lemma follows from the fact that $\Phi$ is a bijection.

\begin{prop}
The expected OP return of $\pi$ is equal to the expected return of each player independently playing a policy $\pi_\Phi^i$ which is the uniform mixture of $\phi(\pi^i)$ for all $\phi \in \Phi$.
\end{prop}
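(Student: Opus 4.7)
The plan is to rewrite the right-hand side as a double expectation over two independently sampled symmetries and then show the left-hand side collapses to the same thing by combining Lemma~1 (invariance of $J$ under applying a common $\phi$ to both players) with Lemma~2 (left-multiplication is a bijection on $\Phi$). Concretely, since $\pi_\Phi^i$ is defined as a uniform mixture of $\phi(\pi^i)$, and since the expected return $J$ is linear in each player's episode-level mixture distribution (sampling a policy once per episode and then rolling out), we have
\begin{equation*}
  J(\pi_\Phi^1, \pi_\Phi^2) \;=\; \mathbb{E}_{\phi_1 \sim \Phi}\,\mathbb{E}_{\phi_2 \sim \Phi}\; J\bigl(\phi_1(\pi^1),\, \phi_2(\pi^2)\bigr),
\end{equation*}
with $\phi_1, \phi_2$ drawn independently. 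I would flag this step as a small lemma to state up front, since it is the only place we use anything beyond the two given Lemmas.

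Next I would manipulate the LHS. Starting from $J_{OP}(\pi) = \mathbb{E}_{\phi \sim \Phi}\, J(\pi^1, \phi(\pi^2))$, I apply Lemma~1 with a symmetry $\phi_1$ drawn uniformly and independently of $\phi$. By Lemma~1, for every $\phi_1 \in \Phi$,
\begin{equation*}
  J\bigl(\pi^1, \phi(\pi^2)\bigr) \;=\; J\bigl(\phi_1(\pi^1),\, \phi_1(\phi(\pi^2))\bigr) \;=\; J\bigl(\phi_1(\pi^1),\, (\phi_1 \circ \phi)(\pi^2)\bigr),
\end{equation*}
so averaging over $\phi_1$ does not change the value. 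Taking the resulting double expectation and applying Lemma~2, for each fixed $\phi_1$ the map $\phi \mapsto \phi_1 \circ \phi$ is a bijection of $\Phi$ onto itself, so the inner expectation over $\phi$ can be rewritten as an expectation over an independent $\phi_2 := \phi_1 \circ \phi$ drawn uniformly from $\Phi$. This yields
\begin{equation*}
  J_{OP}(\pi) \;=\; \mathbb{E}_{\phi_1}\,\mathbb{E}_{\phi_2}\; J\bigl(\phi_1(\pi^1),\, \phi_2(\pi^2)\bigr),
\end{equation*}
which matches the expression for $J(\pi_\Phi^1, \pi_\Phi^2)$ derived above.

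The only non-routine step is the linearity observation that justifies identifying the two-player expectation $\mathbb{E}_{\phi_1, \phi_2} J(\phi_1(\pi^1), \phi_2(\pi^2))$ with $J(\pi_\Phi^1, \pi_\Phi^2)$. I would handle it by noting that, because each $\phi_i$ is drawn once at the start of the episode and then fixed, sampling and then rolling out is equivalent to rolling out the mixture policy directly; expected return is linear in the distribution over the player's policy when that distribution is realized before any actions are taken. The symmetry-averaging argument in the second paragraph is then essentially a bookkeeping exercise using the two Lemmas, and the proposition follows by chaining the two equalities.
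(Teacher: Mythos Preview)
Your proposal is correct and follows essentially the same route as the paper: apply Lemma~1 to introduce an independent outer symmetry $\phi_1$, then use Lemma~2 to decouple $\phi_1\circ\phi$ into an independent $\phi_2$, and finally identify the resulting double expectation with $J(\pi_\Phi)$. If anything, you are more careful than the paper, which silently absorbs your linearity-of-$J$-in-episode-level-mixtures observation into the final equality $=J(\pi_\Phi)$ without comment.
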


\begin{proof}
\begin{align}
    J_{OP}(\pi) & = \mathbb{E}_{\phi \sim \Phi}  \hspace{1mm} { J(\pi^1,  \phi(\pi^2)) } \\
    & = \mathbb{E}_{\phi_1 \sim \Phi, \phi_2 \sim \Phi} \hspace{1mm} { J(\phi_1(\pi^1), \phi_1(\phi_2(\pi^2))) }  \label{eq:permute1} \\
    & = \mathbb{E}_{\phi_1 \sim \Phi, \phi_2 \sim \Phi} \hspace{1mm} { J(\phi_1(\pi^1), \phi_2(\pi^2)) } \label{eq:permute2}\\
    & = J(\pi_{\Phi})
\end{align}
 (\ref{eq:permute1}) follows from Lemma \ref{lemma:permute}, (\ref{eq:permute2}) follows from Lemma \ref{lemma:bijection}.

\end{proof}

\begin{corollary}
The distribution $\pi^*_{OP}$ produced by OP will be the uniform mixture $\pi_\Phi$ with the highest return $J(\pi_\Phi)$.
\label{cor:1}
\end{corollary}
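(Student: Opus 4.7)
The plan is to derive the corollary directly from the preceding proposition by identifying the image of the map $\pi \mapsto \pi_{\Phi}$. Since the proposition already establishes the identity $J_{OP}(\pi) = J(\pi_{\Phi})$, the OP objective is simply $\arg\max_{\pi} J(\pi_{\Phi})$, so the content of the corollary is that this maximum ranges precisely over the set of uniform $\Phi$-mixtures, not over some strictly smaller subfamily.

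First I would spell out the change of variables. As $\pi$ varies over all joint policies, the induced mixture $\pi_{\Phi}$ is by construction a uniform mixture $\tfrac{1}{|\Phi|}\sum_{\phi\in\Phi}\phi(\pi)$. Every such mixture is trivially in the image. The key step, which I expect to be the only nontrivial one, is to show the converse: every uniform $\Phi$-mixture $\mu = \tfrac{1}{|\Phi|}\sum_{\phi\in\Phi}\phi(\pi')$ is itself equal to $\mu_{\Phi}$ for some choice of base policy, so nothing is lost by restricting the argmax to policies of the form $\pi_{\Phi}$.

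To prove this converse I would appeal to Lemma~\ref{lemma:bijection} (idempotence of the mixing operation). Applying the mixing map to $\mu = \pi'_{\Phi}$ yields
\begin{align*}
\mu_{\Phi} \;=\; \frac{1}{|\Phi|^2}\sum_{\phi_1,\phi_2\in\Phi}\phi_1\bigl(\phi_2(\pi')\bigr)
\;=\; \frac{1}{|\Phi|^2}\sum_{\phi_1\in\Phi}\sum_{\phi\in\Phi}\phi(\pi')
\;=\; \mu,
\end{align*}
where the middle equality uses $\{\phi_1\cdot\phi_2 : \phi_2\in\Phi\} = \Phi$ for each fixed $\phi_1$, exactly Lemma~\ref{lemma:bijection}. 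Hence the mixing map is a projection onto the set of uniform $\Phi$-mixtures, and the image of $\pi\mapsto\pi_{\Phi}$ is exactly that set.

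Putting the pieces together: by the proposition, OP maximizes $J(\pi_{\Phi})$; by the argument above, the set $\{\pi_{\Phi} : \pi\}$ equals the set of all uniform $\Phi$-mixtures; therefore an OP optimum $\pi^*$ induces a mixture $\pi^*_{\Phi}$ that attains $\max_{\pi_{\Phi}} J(\pi_{\Phi})$, i.e., is the uniform mixture with highest return, as claimed. I would close by noting that the correspondence is up to the equivalence $\pi \sim \pi'$ whenever $\pi_{\Phi}=\pi'_{\Phi}$, which matches the phrasing that the \emph{distribution} produced by OP is the best mixture.
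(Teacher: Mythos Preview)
Your argument is correct. The paper itself provides no proof for this corollary at all; it is stated as an immediate consequence of the preceding proposition (that $J_{OP}(\pi)=J(\pi_\Phi)$) and is then invoked without further justification in the proof of Proposition~2. What you have done is make explicit the one point the paper leaves tacit: that the map $\pi\mapsto\pi_\Phi$ is a projection onto the set of uniform $\Phi$-mixtures, so maximizing $J(\pi_\Phi)$ over all $\pi$ is the same as maximizing $J(\mu)$ over all uniform mixtures $\mu$. Your idempotence calculation via Lemma~\ref{lemma:bijection} is exactly the right way to pin this down, and it is the only nontrivial content beyond the proposition. In short, you have supplied the details the paper omits, using the same ingredients (the proposition and Lemma~\ref{lemma:bijection}); there is no alternative route here because the corollary is essentially a restatement of the proposition once surjectivity onto the mixture class is observed.
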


Let $\mathcal{L}_i$ be the set of learning rules which input a Dec-POMDP and output a policy for agent $i$. 

A meta-equilibrium is a learning rule for each agent such that neither agent can improve their expected payoff by unilaterally deviating to a different learning rule.

\begin{prop}
If agent $i$ uses OP as their learning rule then OP is a payoff maximizing learning rule for the agent's partner. Furthermore both agents using OP is the best possible meta-equilibrium.
\end{prop}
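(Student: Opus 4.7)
The plan is to decompose the proposition into two parts: (i) when one agent commits to OP, the partner's best response is also OP, so $(\mathrm{OP},\mathrm{OP})$ is a meta-equilibrium; (ii) among all meta-equilibria, $(\mathrm{OP},\mathrm{OP})$ achieves the highest expected payoff. The main engine on which both parts rest is Corollary~\ref{cor:1}, which identifies the OP output with the best $\Phi$-invariant (symmetrized) mixture.

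For part (i), I would start from the observation that if agent~1 runs OP their effective policy is the $\Phi$-invariant mixture $\pi_\Phi^1$, and hence $\phi(\pi_\Phi^1)=\pi_\Phi^1$ for every $\phi\in\Phi$; this symmetry is an immediate consequence of Lemma~\ref{lemma:bijection}. Combined with Lemma~\ref{lemma:permute}, it yields, for any candidate partner policy $\pi^2$,
\begin{align*}
J(\pi_\Phi^1,\pi^2)
&= \mathbb{E}_\phi\, J(\phi(\pi_\Phi^1),\phi(\pi^2)) \\
&= \mathbb{E}_\phi\, J(\pi_\Phi^1,\phi(\pi^2))
\;=\; J(\pi_\Phi^1,\pi_\Phi^2).
\end{align*}
Thus the partner's return depends on $\pi^2$ only through its symmetrization $\pi_\Phi^2$, and by Corollary~\ref{cor:1} OP itself returns the $\Phi$-invariant mixture of maximum value. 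So OP is a payoff-maximizing best response to OP, which both establishes the first sentence of the proposition and shows $(\mathrm{OP},\mathrm{OP})$ is a meta-equilibrium.

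For part (ii), I would argue that any admissible learning rule in the zero-shot setting must output a $\Phi$-invariant distribution over policies: since the Dec-POMDP does not distinguish symmetry-equivalent trajectories and the designers share no side channel, any two policies related by some $\phi\in\Phi$ are indistinguishable to a rational designer. The pair of policies produced by any meta-equilibrium $(L_1,L_2)$ may therefore be replaced, without changing the expected return, by their symmetrizations, so the realized payoff takes the form $J(\pi_\Phi^1,\pi_\Phi^2)$. Corollary~\ref{cor:1} then upper-bounds this by $\max_{\pi_\Phi} J(\pi_\Phi)$, a value $(\mathrm{OP},\mathrm{OP})$ realizes exactly by part (i).

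The hard part is the premise in (ii) that rational, uncoordinated learners must produce $\Phi$-invariant outputs. Without it one can imagine a meta-equilibrium in which both designers coincidentally commit to the same symmetry-breaking convention (``always pick the first lever in index order''), which could nominally beat the OP payoff. Making the argument rigorous therefore requires formalizing the zero-shot assumption --- no shared action labels, no shared enumeration convention, no exogenous focal points --- so that any purportedly asymmetric learning rule is really a uniform mixture over its $\Phi$-orbit from the meta-game's point of view. I would plan to handle this at the same intuitive level as the lever example in the introduction, unless a more careful model of the designers' information structure turns out to be required.
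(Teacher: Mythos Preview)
Your proposal is correct and considerably more explicit than the paper's own proof, which is two sentences: ``Since the Dec-POMDP has no labels for actions and states, $\mathcal{L}_i$ must choose all $\phi(\pi)$ with equal probability. Among these possible outputs, $\pi^*_{OP}$ maximizes the return by Corollary~\ref{cor:1}.'' The paper thus dispatches both claims at once by imposing the no-labels restriction on \emph{all} learning rules up front and then invoking Corollary~\ref{cor:1}.

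Your part~(ii) is exactly this argument. Your part~(i), however, takes a genuinely different route: rather than constraining the partner's learning rule via the no-labels assumption, you show by the calculation $J(\pi_\Phi^1,\pi^2)=J(\pi_\Phi^1,\pi_\Phi^2)$ that once one side plays a $\Phi$-invariant mixture the partner's payoff depends only on the symmetrization of $\pi^2$, and then Corollary~\ref{cor:1} finishes. This is a cleaner best-response argument because it does not rely on any informal modeling assumption about the partner --- it establishes the first sentence of the proposition unconditionally. The paper's version is terser but leans on the no-labels premise for both halves; you correctly isolate that premise as needed only for the ``best meta-equilibrium'' claim, and your closing paragraph identifies precisely the hand-waving that the paper's one-liner also hides.
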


\begin{proof}
Since the Dec-POMDP has no labels for actions and states, $\mathcal{L}_i$ must choose all $\phi(\pi)$ with equal probability. Among these possible outputs, $\pi^*_{OP}$ maximizes the return by Corollary \ref{cor:1}.
\end{proof}

\section{Implementing Other Play via Deep RL}
We now turn to optimizing the OP objective function. In many applications of interest the Dec-POMDP is not tabular. Thus, deep RL algorithms use function approximation for the state space and attempt to find local maxima of Equation \ref{eq:selfplay} using self-play reinforcement learning. 

We show how to adapt this method to optimize the other-play objective (Equation \ref{eq:otherplay}). This amounts to applying a very specific kind of asymmetric domain randomization~\cite{tobin2017domain} during training.

During each episode of MARL training, for each agent $i$ a random permutation $\phi_i \in \Phi$ is chosen uniformly iid from $ \Phi$, and agent $i$ observes and acts on $\phi(\mathcal{S}, O, \mathcal{A}$). Importantly, the agents act in different permutations of the same environment.


This environment randomization is equivalent to other-play, because the MDP remains constant under $\phi_i$ while the effect of agent $i$'s policy on the environment is $\phi_i(\pi_i)$. The fixed points of independent optimization of $\pi$ under this learning rule will be joint policies where each $\pi_i$ is a best response to the uniform mixture of permutations of partner policies, i.e. precisely the permutation-invariant equilibria that are the solutions of other-play.

We note that OP is fundamentally compatible with any type of optimization strategy and can be applied whenever there are symmetries in the underlying MDP.

{}

\section{Experiments}
\label{sec:experiments}

We evaluate OP in two different settings. In each setting we will compare standard SP against OP. We will perform comparisons of agents trained together to agents trained separately that are placed into a zero-shot coordination test game.

\subsection{Lever Game}
We begin with the ``lever game'' mentioned in the introduction. This environment is tabular, there are only $10$ actions possible per player. Here, during training, we use simple joint action learning, compute the true gradient with respect to the current policy and update. We show training time (i.e. expected reward with itself) and test time (zero-shot) coordination performance for both SP (optimizing equation \ref{eq:selfplay}) and OP (optimizing equation \ref{eq:otherplay}). The code is available as a notebook online here and can be executed online without downloading: \url{https://bit.ly/2vYkfI7}.

Figure~\ref{fig:matrix_game_results} shows the results. As expected, OP agents coordinate on the unique option of $0.9$ points both during the training phase and at test time. As a consequence, OP agents can carry out successful zero-shot coordination when paired with other OP agents. 

In contrast, SP agents achieve higher rewards of $1.0$ points during the training phase but entirely fail to coordinate with other, independently trained, SP agents. 

\begin{figure}[h]
\vskip -0.1in
\begin{center}
\centerline{\includegraphics[width=\columnwidth]{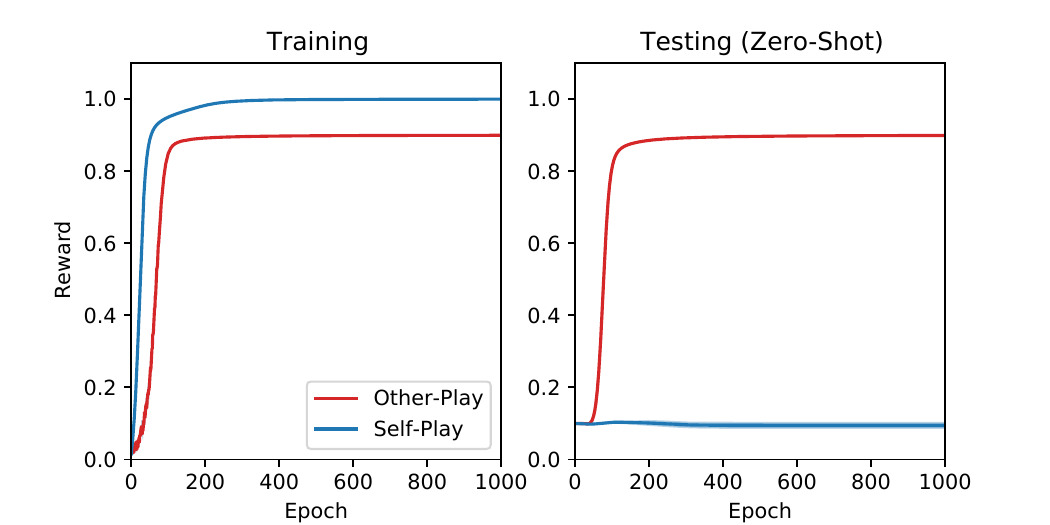}}
\caption{Train and test performance of self-play and other-play algorithms on the lever coordination game. Shown is the mean, shading is the standard error of the mean (s.e.m.), across 30 different seeds.
}
\label{fig:matrix_game_results}
\end{center}
\vskip -0.2in
\end{figure}

\subsection{Hanabi with AI Agents}
We now turn to a much more complex environment. We construct agents for the cooperative card game Hanabi, which has recently been established a benchmark environment for multi-agent decision making in partially observable settings \cite{bard2020hanabi}. 

Hanabi is a cooperative card game with the interesting twist that players cannot see their own cards and hence have to rely on receiving information from the other player (who can see their hand). In Hanabi, there are two main ways of exchanging information: first of all, players can take costly ``hint'' actions that point out subsets of cards based on rank or color. For example, hinting for ``blue'' reveals the color of all blue cards. 
Secondly, observing the actions themselves can be informative, in particular when players have pre-established conventions. The goal in Hanabi is to play cards in a legal order completing stacks of cards, one for each color. 
There are 5 color and 5 cards and the maximum points is 25. Players will lost a life token if they play a card out of order. Once they exhaust the deck or lose all 3 lives (``bomb out''), the game will terminate.

As noted, the vast majority of research on Hanabi has been in the self-play setting, in which a group of agents is trained to jointly obtain the highest possible score.
To apply OP in Hanabi we note that assuming no side information, a permutation of the colors of the cards leaves the game unchanged. We use this as our class of symmetries.

\subsection{MARL Training Details}
OP can be applied on top of any SP algorithm. In Hanabi the Simplified Action Decoder (SAD)~\cite{hu2019simplified} method achieves SOTA performance for RL agents. We use SAD as a base algorithm onto which we add OP. We use the open-sourced implementation of SAD as well as most of its hyper-parameters but with two major modifications. First, we use 2 GPUs for simulation instead of 1 as in the original paper. This doubles the data generation speed and has a profound effect on reducing the wall clock time required to achieve competitive performance. Second, we introduce extra hyper-parameters that control the network architecture to add diversity to the model capacity in order to better demonstrate the effectiveness of OP. Specifically, the network can have either 1 or 2 fully connected layers before 2 LSTM layers and can have an optional residual connection to by-pass the LSTM layers. For SP, we re-train the base SAD and the SAD + AUX variant proposed in \citet{hu2019simplified}. SAD + AUX is specifically engineered for Hanabi by adding an auxiliary task to predict whether a card is playable, discardable, or unknown. We train agents with the aforementioned 4 different network architectures. We run each hyper-parameter configuration with 3 different seeds and thus 12 models are produced for each category of \{SAD, SAD + AUX, SAD + OP, SAD + AUX + OP\}. 

\subsection{Evaluation}
We evaluate the models within the same category by pairing different models together to play the game, a process we refer to as \emph{cross-play}. Clearly, if independent training runs (``seeds'') from the same training method fail to coordinate with each other at test time it is unlikely they will coordinate with agents optimized by through a different process, let alone humans. As such, cross-play is a cheap proxy to evaluate whether a training method has potential for zero-shot coordination with human players.
Figure \ref{fig:hanabi_cross_play} shows the scores obtained between all pairs of agents. Table \ref{tab:hanabi_cross_play} shows the average within-pair and cross-play scores. We see that SAD coordinates with itself but fails to coordinate with any other SAD agent. SAD with OP, however, significantly improves the cross-play.
The effect is especially profound when the model has limited representation power. The top left corner of the graph, which corresponds to the simplest models that have only 1 fully connected layers, 2 LSTM layers and no residual connection, shows almost perfect cooperation scores. With the network growing more complicated, different strategies start to emerge and the cross-play performance drops. Auxiliary task implicitly improves cross-play scores by encouraging all agents to act basing on grounded information and confident predictions. Nonetheless, adding OP to SAD + AUX further improves performance and achieves the highest cross-play payoffs.

\begin{figure*}[h]
\begin{center}
  \includegraphics[width=0.95\textwidth]{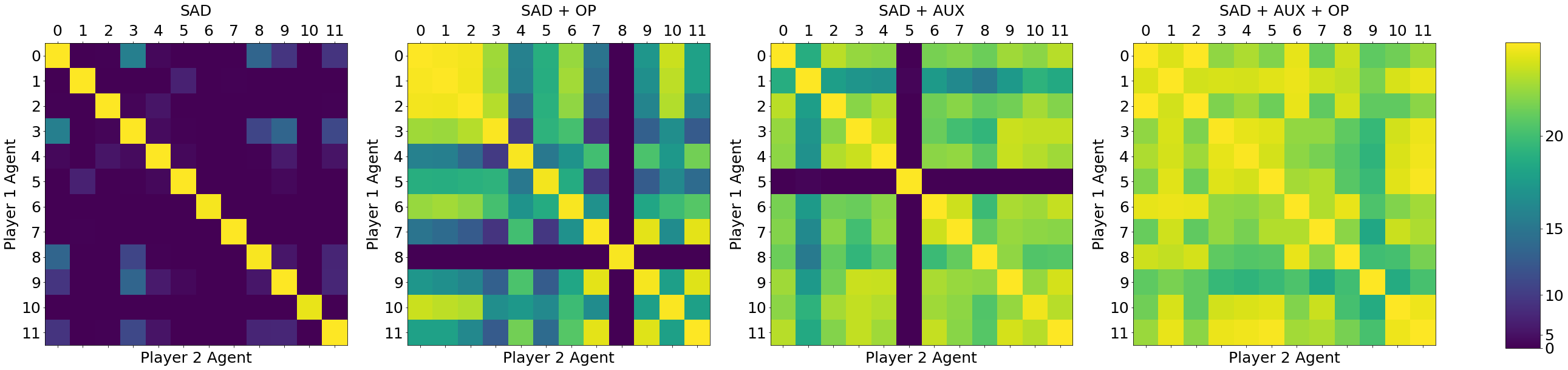}
  \caption{Cross-Play Matrix. Visualization of paired evaluation of different agents trained under the same method. y-axis represents the agent index of player 1 (first mover) and x-axis represents that of player 2. Agents 0-2: 1-layer FC, without residual connection; Agents 3-5: 1-layer FC, with residual connection; Agents 6-8: 2-layer FC, without residual connection; Agents 9-11: 2-layer FC, with residual connection. All agents have 2-layer LSTMs after the FC layers. Each block in the grid is obtained by evaluating the pair on 10K games with different seeds. Please refer to Table~\ref{tab:hanabi_cross_play} for numeric results.}
  \label{fig:hanabi_cross_play}
\end{center}
\end{figure*}

\begin{table*}[]
\begin{center}
\begin{tabular}{l c c c}
\cline{1-4}
Method         &  Cross-Play & Cross-Play(*) & Self-Play \\ \cline{1-4}
SAD            & 2.52 $\pm$ 0.34  & 3.02  $\pm$ 0.39 & 23.97 $\pm$ 0.04 \\   
SAD + OP       & 15.32 $\pm$ 0.65 & 18.28 $\pm$ 0.36 & 23.93 $\pm$ 0.02 \\   \cline{1-4}
SAD + AUX      & 17.65 $\pm$ 0.69 & 21.09 $\pm$ 0.18 & \textbf{24.09 $\pm$ 0.03} \\  
SAD + AUX + OP & \textbf{22.07 $\pm$ 0.11} & \textbf{22.49 $\pm$ 0.18} & 24.06 $\pm$ 0.02 \\   \cline{1-4}
\end{tabular}
\caption{Cross-Play Performance. The average performance of pairs of agents that are train with the same method but different network architecture and/or seeds. Please refer to Figure~\ref{fig:hanabi_cross_play} for visualization of performance for each individual pair. Cross-Play score is non-diagonal mean of each grid. Cross-Play(*) is the cross-play score after removing the worst model from the grid. Self-Play score is the score attained when agents play with the partner they are trained with.}
\label{tab:hanabi_cross_play}
\end{center}
\end{table*}

We can further study the policies resulting from these learning algorithms. Figure \ref{fig:hanabi_action_matrix} picks the agent with highest cross-play performance in each category (top row) as well as their worst possible partner (bottom row) and presents $P(a^i_t \mid a^{j}_{t-1})$ over a subset of actions averaged over time-steps in 1000 episodes generated through self-play. In other words, we ask, do the agents respond very differently to possible actions of their partner?  A large difference indicates that what an agent would do in a situation is very different from what their partner would do: a recipe for miscoordination!

We see that two paired SAD agents have very different policies and thus miscoordinate a lot. They also learn ``inhuman'' conventions that are hard for human to understand. For example, the agent hints Color5 to indicate discarding the 1st card while its partner interprets that as playing the 2nd card. OP eliminates these type of conventions. From the plot and our experience of playing with the SAD + OP agent, we find that it tends to use color hints to indicate either that the partner should save the card, or to disambiguate with a subsequent rank hint. This is not a typical strategy played by seasoned human players but is easy understand and thus makes the agent easier to cooperate with. However, due to the way we implement OP in Hanabi, it is still possible to form secretive conventions such as using all color hints to indicate a specific move. For example, the worst partner of SAD + OP uses all color hints to indicate playing the 5th card.

\begin{figure*}[h]
\begin{center}
  \includegraphics[width=0.95\textwidth]{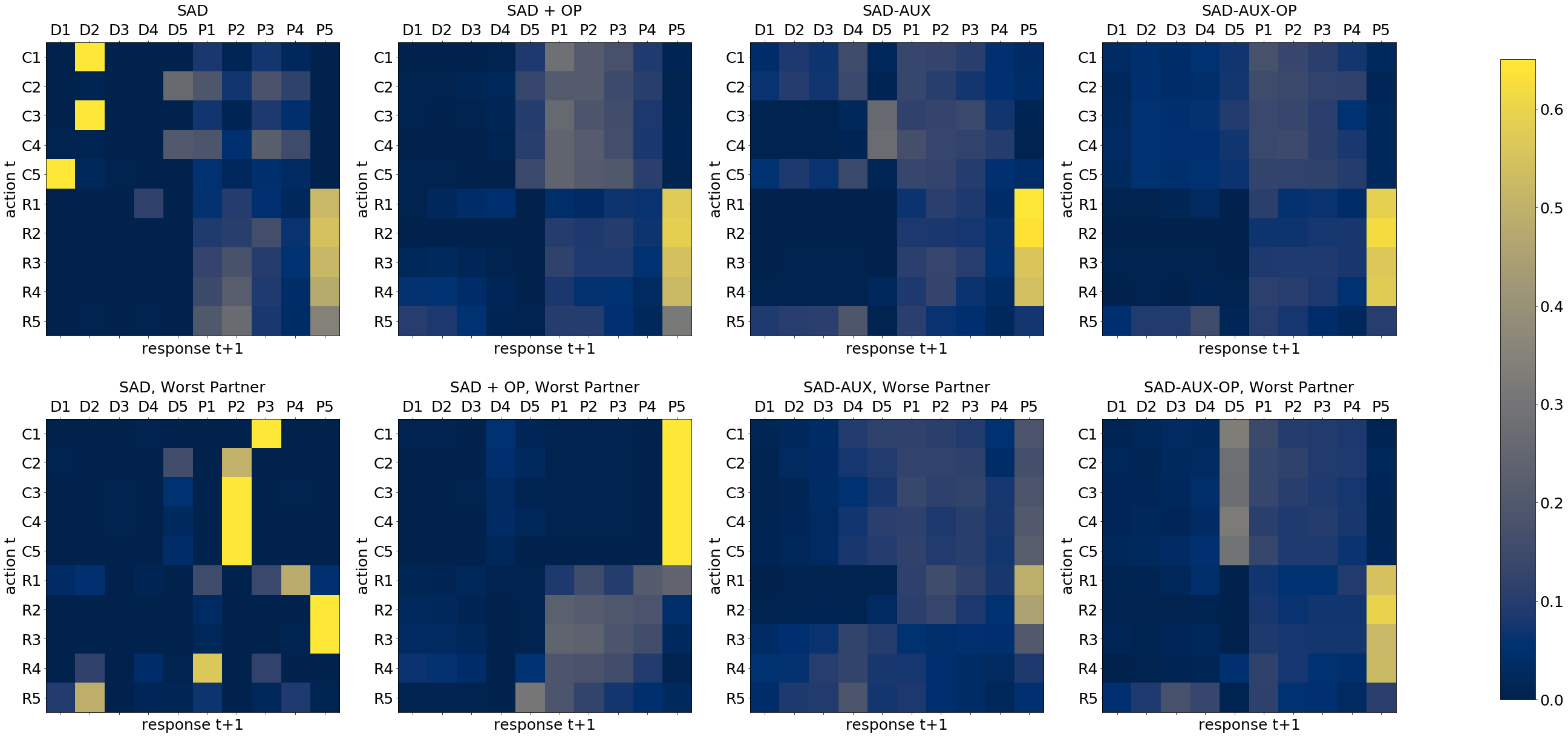}
  \caption{$P(a^i_t \mid a^{j}_{t-1})$ Matrices. Each subplot can be used as a proxy to roughly decipher conventions of an agent. The y-axis means the action taken at time-step $t$ and x-axis means the percentage of each action as response at time-step $t+1$. Here we only show a quarter of the matrix that corresponds to the interaction between color/rank hint and play/discard positions. C1-C5 and R1-R5 mean hinting 5 different colors and ranks respectively while D1-D5 and P1-P5 mean discarding and playing 1st-5th cards of the hand. For each plot, we take an agent and run 1000 episodes of self-play to compute statistics. The agents that achieved the highest cross-play scores in Figure~\ref{fig:hanabi_cross_play} are used to generate the top row and their worst partners are chosen to render the bottom row.} \label{fig:hanabi_action_matrix}
 
\end{center}
\end{figure*}

\subsection{Hanabi with Humans}
So far we have focused on AI agents that play other AI agents and have shown that OP is a meta-equilibrium with respect to learning rules in the zero-shot setting. 

We now ask: do human strategies in Hanabi also have an OP-like quality? In other words, do OP agents perform well with humans?

To begin to answer this question we recruited 20 individuals from a board game club. These individuals were familiar with Hanabi but not expert players. We asked each individual to play a game of Hanabi with two bots, in random order, using the user interface open-sourced by ~\cite{lerer2019improving}. We note that we did not provide the participants with any information about the bots, either regarding their strategy or the method through which they were trained. 

For testing we selected our best SAD + AUX + OP agent based on the cross-play performance (henceforth OP bot). We also have individuals play with the SOTA self-play agent from ~\cite{hu2019simplified} (henceforth SP bot). We download models from their GitHub repo and pick the model based on cross-play scores. For reference, the SP model used here gets 23.99 in self-play and 20.99 in cross-play with other agents where the only difference among them is seed.

Since in Hanabi the exact deck being used can make a huge difference (for example, some hands are unwinnable), to reduce the variance of our results we play each seed by two different players, one for our OP agent, and one for the control. Importantly, to prevent any adaptation advantages, we alternate the order between which bot came first across different participants. 

Humans achieved an average score of 15.75 (s.e.m. 1.23) with the OP bot and ``bombed out'' 45\% of games. Thus the OP bot, which has high cross-play scores with other OP bots is also able to play with humans. Note, in our counting convention players keep the current score when they bomb out, which we believe is more appropriate for the zero-shot setting.

By comparison, humans paired with the SP bot achieve an average score of 9.15 (s.e.m. 1.18) and an 85\% bomb rate. To the best of our knowledge, the only other research involving human-AI collaboration in Hanabi is~\cite{eger2017intentional}. Here, a hand-coded AI agent, designed to play well with humans is used and achieves an average of around 15.0 points when paired with humans. 
Beyond the average scores and ``bomb-out'' rates, we thus also have access to pairwise comparisons for the two bots when playing two different people on the same deck. 

\begin{figure}[h]
\begin{center}
\centerline{\includegraphics[width=0.7\columnwidth]{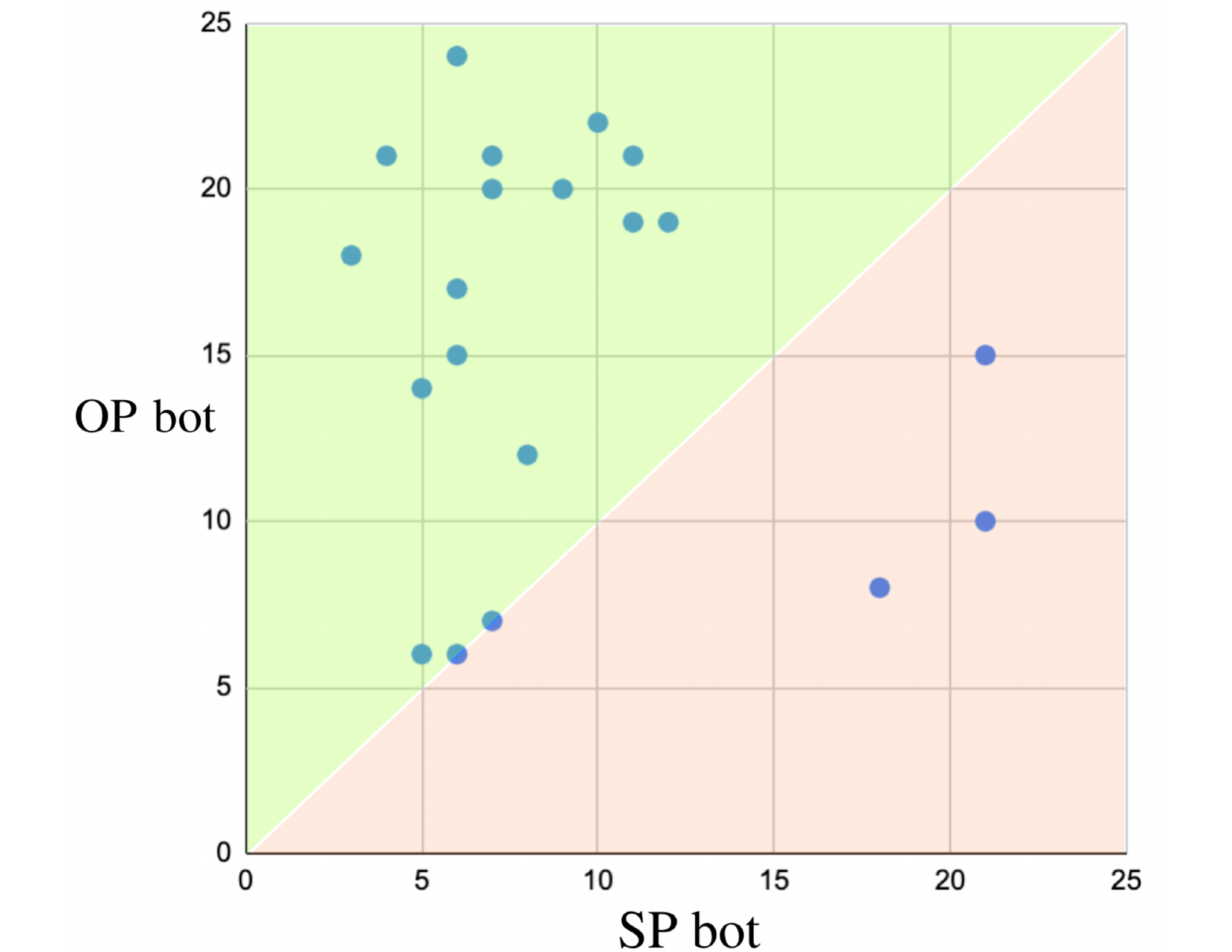}}
\caption{Shown are all scores obtained by human testing. Each blue dot is one seed, when humans are matched with the SP bot (x-axis) and the OP bot (y-axis).}
\label{fig:human_data_plot}
\end{center}
\vskip -0.2in
\end{figure}

These preliminary numbers confirm that our OP bot significantly outperformed the SOTA self-play bot from~\cite{hu2019simplified} when paired with humans. In particular, OP won 15 out of the 20 per-seed comparisons and tied in 2 cases, losing to the control group for 3 seeds (p=0.004\footnote{exact binomial test of the null hypothesis being that P(OP higher score) $\le$ P(control higher score).}).

Of course, these results do not suggest that OP will work in \textit{every} zero-shot coordination where AI agents need to cooperate with humans. However, they are encouraging and suggest that OP is a fruitful research direction for the important problem of human-AI coordination.

\section{Other Attempts}
While attempting to make progress on the zero-shot coordination problem in Hanabi we tried a variety of approaches. Here we discuss some other approaches that seemed promising but did not yield agents that were able to coordinate with agents they were not trained with. While this does not necessarily mean these approaches are doomed to failure, we report these results as information for other researchers interested in this problem.

In particular we tried multi-agent RL adaptations of cognitive hierarchies~\cite{stahl1993evolution}, k-level reasoning~\cite{costa2001cognition} and training a population of agents. 
Our original inspiration was that both cognitive hierarchies and k-level reasoning should reduce the tendency towards arbitrary symmetry breaking and have been shown to produce human like decision making in other settings~\cite{wright2010beyond}. Similarly, population based approaches are gaining popularity for regularizing communication protocols in the field of emergent communication, see e.g. ~\cite{fitzgerald2019populate,tieleman2018shaping,lowe2019learning}.
We found that none of these approaches produced high cross-play performance in Hanabi, which we now consider a necessary condition for high zero-shot performance with humans. 
In hind-sight, considering that all of these approaches would necessarily fail in the matrix game example, this is not at all surprising. Still, to help future researchers learn from our endeavours, we are adding all results to the supplementary material and will open-source the corresponding agents.

\section{Conclusion}
\label{sec:conclusion}

We have shown that a simple expansion of self-play which we call \emph{other-play} can construct agents that are better able to zero-shot coordinate with partners they have not seen before. We have proven theoretical properties of the OP strategy, shown how to implement it with deep RL, and shown in experiments with the cooperative card game Hanabi that OP can construct robust agents that can play with other AIs as well as with humans.

We do not claim that OP is a silver bullet for all zero-shot coordination problems. In particular, because OP is a modification of the SP algorithm, it can be combined with many of the algorithmic innovations that have been developed to improve SP in various games \citep{lanctot2017unified,foerster2018counterfactual,lowe2017multi,foerster2017stabilising}. Thus, we believe that this represents an exciting research direction for those interested in moving deep RL beyond two-player, zero-sum environments to ones involving coordination and cooperation. Currently we are assuming that the symmetries $\Phi$ are given to the algorithm. However, in principle, discovering the symmetries of an MDP is another optimization problem, which opens interesting avenues for future work.
\section*{Acknowledgements}
We would like to thank Noam Brown for encouraging discussions and Pratik Ringshia for help with the UI for human testing. 
We would also like to thank our human participants for offering to help. 

\bibliography{references}
\bibliographystyle{icml2019}

\newpage
\appendix
\section{Details on Other Attempts}
\label{supp:other_attempt}

\begin{table}[h]
\begin{center}
\begin{tabular}{l c c}
\cline{1-3}
k  &  Cross-Play & Self-Play \\ \cline{1-3}
1  & 1.06 $\pm$ 0.04 & 1.04 $\pm$ 0.06 \\  
2  & 0.95 $\pm$ 0.18 & 0.99 $\pm$ 0.33 \\  
3  & 1.49 $\pm$ 0.11 & 2.63 $\pm$ 0.34 \\  
4  & 2.48 $\pm$ 0.22 & 5.89 $\pm$ 0.75 \\  
5  & 2.04 $\pm$ 0.35 & 7.22 $\pm$ 0.56 \\\cline{1-3}
\end{tabular}
\caption{Cognitive Hierarchies Performance. We train CH for 5 levels with 3 seeds. The cross-play and self-play results are computed by averaging scores of intra-level pairing of agents trained with different seeds. Cross-play score is averaged over 6 pairs and self-play score is averaged over 3 pairs for each cell.}
\label{tab:ch}
\end{center}
\end{table}

In this section we provide more details on other attempts mentioned previously. 

The core idea behind cognitive hierarchies (CH)~\cite{stahl1993evolution} and k-level reasoning~\cite{costa2001cognition} is to train a sequence of $K$ agents of different capabilities. The final agents may hopefully learn strategies that cross-play well through such explicit route of evolution. In our implementation of CH, the first agent $a^{(0)}$ in the sequence is a random agent that pick actions uniformly regardless of the state. The $k$th agent $a^{(k)}$ is trained to be the ``best response'' to the pool of agents $\{a^{(0)}, ... , a^{(k-1)}\}$. Intuitively, this means that the first agent will learn to play based only on the hinted facts, i.e. ground information, to play Hanabi because $a^{(0)}$'s actions contain no intentions nor conventions. $a^{(2)}$ can then learn to give more useful hints and the subsequent agents may learn more complicated behaviors. In k-level, the $k$th agent $a^{(k)}$ only learns to best respond $a^{(k-1)}$ and other aspects remain the same. Because the agents are trained with a random agent $a^{(0)}$ and they will ``bomb out'' inevitably, we alter the reward scheme so that the agents receive reward 0, instead of the negative of current score, when they lose all life tokens.

The performance of CH is shown in Table~\ref{tab:ch}. The most prominent phenomenon is that CH converges quite slowly, due to the fact that it needs to cooperate with a pool of different yet primitive policies. For each level, we train the models until convergence and 5 levels normally take several days to complete. For reference, it roughly takes less than 20 hours for SAD and our other-play agents to reach 23 points in self-play under the same settings and hardware. The prohibitive cost in time and computation make CH unsuitable for complicated tasks like MARL in Hanabi. Moreover, even though the self-play score is low, we can already see a clear performance gap between self-play and cross-play, making it safe to assume that CH will not work well in zero-shot coordination.

\begin{table}[]
\begin{center}
\begin{tabular}{l c c}
\cline{1-3}
k  &  Cross-Play & Self-Play  \\ \cline{1-3}
1  & 0.73 $\pm$ 0.18 & 0.95 $\pm$ 0.36 \\ 
2  & 0.50 $\pm$ 0.02 & 0.54 $\pm$ 0.13 \\ 
3  & 2.99 $\pm$ 0.11 & 3.24 $\pm$ 0.24 \\ 
4  & 1.71 $\pm$ 0.12 & 2.57 $\pm$ 0.11 \\  
5  & 6.14 $\pm$ 0.58 & 7.27 $\pm$ 1.48 \\
6  & 2.08 $\pm$ 0.22 & 4.52 $\pm$ 1.05 \\
7  & 6.28 $\pm$ 0.92 & 8.82 $\pm$ 2.17 \\
8  & 1.82 $\pm$ 0.25 & 4.89 $\pm$ 1.95 \\
9  & 6.87 $\pm$ 0.91 & 10.26 $\pm$ 2.52 \\
10 & 2.05 $\pm$ 0.28 & 6.54 $\pm$ 2.65 \\

\cline{1-3}
\end{tabular}
\caption{K-Level Performance. We train K-Level with $K=10$ and 3 seeds. The cross-play and self-play scores are computed by intra-level pairing of agents trained with different seeds.}
\label{tab:k-level}
\end{center}
\end{table}

In Table~\ref{tab:k-level}, we show results of K-Level method trained for 10 levels. Despite the gap between cross-play and self-play being smaller, this method suffers from non-monotonic improvements between levels and the same high cost in time and sample complexity.

\begin{table}[h]
\begin{center}
\begin{tabular}{l c c}
\cline{1-3}
            &  Population1 & Population2  \\ \cline{1-3}
Population1 & 23.46 $\pm$ 0.01 & 19.97 $\pm$ 0.06 \\  
Population2 & 20.16 $\pm$ 0.06 & 23.44 $\pm$ 0.01 \\  
\cline{1-3}
\end{tabular}
\caption{Performance of Population Based Method. Each cell is computed by pairing all agents from one population with those from the other population and then average the scores. Diagonal can be seen as self-play score and non-diagonal can be seen as cross-play score under a population setting.}
\label{tab:pop}
\end{center}
\end{table}

\begin{figure}[h]
\begin{center}
\centerline{\includegraphics[width=\columnwidth]{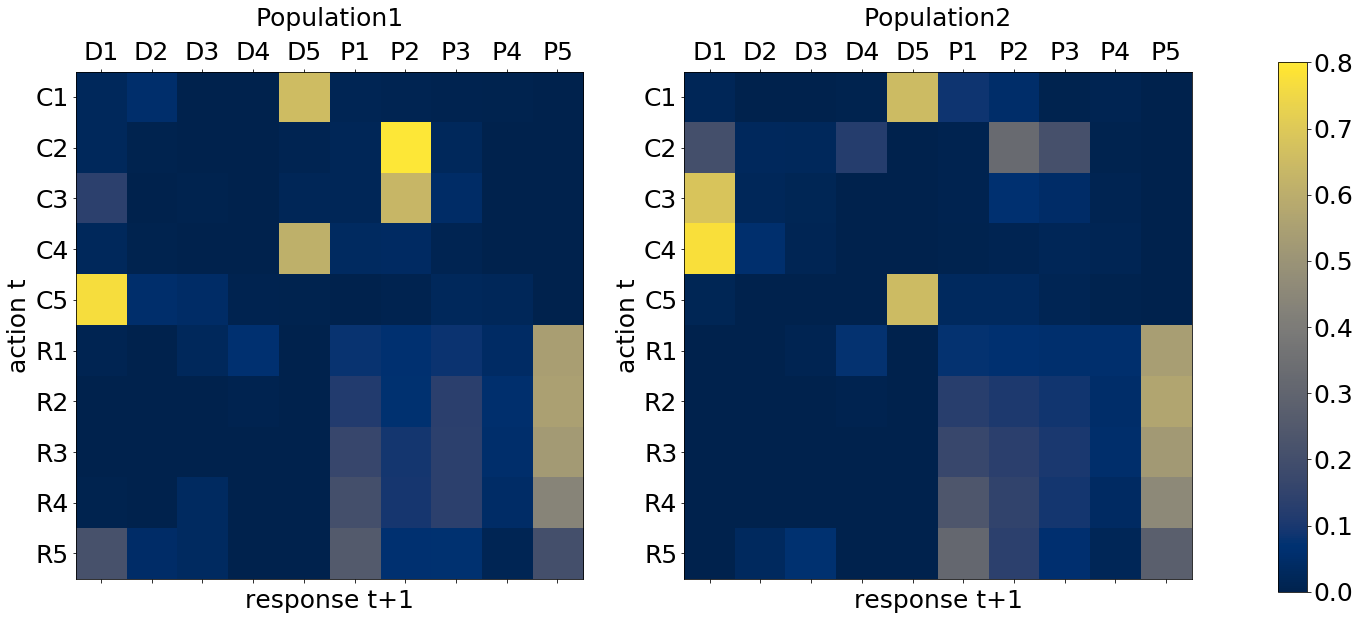}}
\caption{$P(a^i_t \mid a^{j}_{t-1})$ matrices of one model from each population. The semantic of the visualization is identical to Figure~\ref{fig:hanabi_action_matrix}.}
\label{fig:population_action_matrix}
\end{center}
\end{figure}

Different from CH and K-Level where agents are trained sequentially, population based approaches~\cite{fitzgerald2019populate, tieleman2018shaping,lowe2019learning} train agents simultaneously by pairing distinct agents together to generate samples. We briefly experimented with a simple population setting where we initialize $N$ different agents with their private replay buffers. They are uniformly paired together with each other to generate samples and write their observation action sequences into the their own buffer. Each agent is optimized independently at each training step. We train 2 populations with different seeds. Each of them contains 4 agents initialized differently. The numerical results are shown in Table~\ref{tab:pop}. This method can achieve decent cross-play scores. It is worth noting that the diversity between the hyper-parameters of the two populations is much smaller than that of the experiments shown in Figure~\ref{fig:hanabi_cross_play} so that they are not directly comparable. However, a closer look at their respective policy through $P(a^i_t \mid a^{j}_{t-1})$ matrices reveals the problem. The way they use color hints not only differs greatly from each other but also breaks the color symmetry of the game, which is the exact problem other-play tries to solve. Qualitatively they are hard for human to play with. They manage to achieve good cross-play scores because the agents seldom use color hints.

\end{document}